\newtheorem{theorem}{Theorem}
\newtheorem{lemma}{Lemma}
\begin{document}
%
\title{Self-paced Principal Component Analysis }
%
%

\author{Zhao Kang, Hongfei Liu, Jiangxin Li, Xiaofeng Zhu, and Ling Tian
\thanks{The authors are with the School of Computer Science and Engineering,
University of Electronic Science and Technology of China, Chengdu, Sichuan,
611731. E-mail: zkang@uestc.edu.cn}
\thanks{Manuscript received April 19, 2005; revised August 26, 2015.}}

%
%

\markboth{Journal of \LaTeX\ Class Files,~Vol.~14, No.~8, August~2015}%
{Shell \MakeLowercase{\textit{et al.}}: Bare Demo of IEEEtran.cls for IEEE Journals}
%


\IEEEtitleabstractindextext{


\begin{abstract}
  Principal Component Analysis (PCA) has been widely used for dimensionality reduction and feature extraction. Robust PCA (RPCA), under different robust distance metrics, such as $\ell_1$-norm and $\ell_{2,p}$-norm, can deal with noise or outliers to some extent. However, real-world data may display structures that can not be fully captured by these simple functions. In addition, existing methods treat complex and simple samples equally. By contrast, a learning pattern typically adopted by human beings is to learn from simple to complex and less to more. Based on this principle, we propose a novel method called Self-paced PCA (SPCA) to further reduce the effect of noise and outliers.
  Notably, the complexity of each sample is calculated at the beginning of each iteration in order to integrate samples from simple to more complex into training. Based on an alternating optimization, SPCA finds an optimal projection matrix and filters out outliers iteratively. Theoretical analysis is presented to show the rationality of SPCA. Extensive experiments on popular data sets demonstrate that the proposed method can improve the state-of-the-art results considerably. 
\end{abstract}
\begin{IEEEkeywords}
Dimension reduction, self-paced learning, principal component analysis.
\end{IEEEkeywords}}

\maketitle

\IEEEdisplaynontitleabstractindextext

\IEEEpeerreviewmaketitle
\vspace{0.6cm}

\IEEEraisesectionheading{\section{Introduction}\label{sec:introduction}}
\IEEEPARstart{N}{owadays}, machine learning, pattern recognition, and data mining applications often involve data with high-dimensionality, such as face images, videos, gene expressions, and time series. Directly analyzing such data will suffer from the curse of dimensionality and lead to suboptimal performance \cite{zhao2018new,liu2017efficient}. Therefore, it is paramount to look for a low-dimensional space before subsequent analysis. PCA is a popular technique for this task \cite{zhang2012trace,peng2020robust}.  

Basically, PCA seeks a projection matrix such that the projected data well reconstruct the original data in a least square sense, which inherently makes PCA sensitive to noise and outliers \cite{shahid2015robust,candes2011robust}. In practice, data are often contaminated. To handle this problem, a number of robust versions of PCA have been developed in the last few years. They can be broadly classified into two categories: $\ell_1$-norm based approaches and nuclear-norm based approaches. In essence, nuclear-norm based methods aim to find a clean data with low-rank structure \cite{peng2019res}. Generally, this kind of methods does not directly generate a lower dimension representation. Some representative methods are robust PCA (RPCA) \cite{candes2011robust}, graph-based RPCA \cite{shahid2015robust,kang2019robust}, and non-convex RPCA \cite{netrapalli2014non,kang2015robust}, which are typically used for foreground-background separation. Moreover, they are transductive methods and can not handle out-of-samples. Though Bao et al. \cite{bao2012inductive} propose an inductive approach, it is targeted for a clean data. Liu et al. \cite{9141402} develop a computationally simple paradigm for image denoising using superpixel-based PCA. Zhu et al. \cite{2017Graph} integrate PCA with manifold learning to learn the hash functions to achieve efficient similarity search.

Unlike nuclear-norm based methods, $\ell_1$-norm PCA adopts $\ell_1$-norm to replace squared Frobenius norm as the distance metric. For instance, $L_1$-PCA tries to minimize the $\ell_1$-norm reconstruction error \cite{ke2005robust}. Though it improves the robustness of PCA, it does not have rotational invariance \cite{ding2006r}. Some methods maximize $\ell_1$-norm covariance \cite{wang2014robust,ju2015image}. CS-$\ell_1$-PCA developed in \cite{2016Compressed} calculate robust subspace components by explicitly maximizing $\ell_1$ projection to enable low-latency video surveillance.

Some recent works point out that aforementioned $\ell_1$-norm PCA methods need to calculate the data mean in the least square sense, which is not optimal for non-Frobenius norm \cite{oh2016generalized,wang2017optimal}. Therefore, optimal mean RPCA (RPCA-OM) \cite{nie2014optimal} optimizes both projection matrix and mean. Nevertheless, it can achieve the global mean \cite{song2017low}. \cite{luo2016avoiding} maximizes the projected $\ell_1$ differences between each pair of points. Though it avoids the mean computation, it can be easily stuck into bad local minima. $\ell_{2,p} (0<p<2)$ is further used to measure the variation between each pair of projected data in $L_{2,p}$-RPCA \cite{liao2018robust}. 

Despite aforementioned approaches use different types of robust objective functions, real-world data might display structures that can not be fully captured by these fixed functions \cite{haeffele2019structured}. In addition, they have another inherent drawback, \emph{i.e.}, they treat complex and simple samples equally, which violates the human cognitive process. Human learning starts from simple instances of learning task, then introduces complex examples step by step. This learning scheme is called self-paced learning \cite{kumar2010self} and can alleviate the outliers issue \cite{zhang2018self,meng2015objective}. 

To improve the robustness of existing RPCA, we propose a novel method called Self-paced PCA (SPCA) by imitating human learning. Based on $L_{2,p}$-RPCA, we design a new objective function which evaluates the easiness of samples dynamically. Consequently, our model can learn from simple to more complicated samples. Both theoretical analysis and experimental results show that our new method is superior to prior robust PCA algorithms for dimensionality reduction. 

In summary, our main contributions are the following.
\begin{itemize}
    \item To further eliminate the impact of noise and outliers, we introduce cognitive principle of human beings into PCA. This can improve the generalization ability of PCA. Theoretical analysis reveals the robustness nature of our method.
 \item A novel weighting function is designed for maximization problem, which can define the complexity of samples and gradually learn from ``simple" samples to ``complex" samples in the learning process.
    \item  Both numerical and visual experimental results justify the effectiveness of our method.
\end{itemize}

\section{Principal Component Analysis Revisited}

Given a training data matrix  $X = [x_1, x_2, \dots, x_n]\in \mathcal{R}^{d\times n}$, where $d$ is number of features and $n$ is the number of samples. The goal of PCA is to find a low-rank projection matrix $U = [u_1, u_2, \dots, u_k]\in \mathcal{R}^{d\times k}$ that projects the feature space of original data to a new feature space with lower dimensionality $k(k<d)$ such that the reconstruction error is minimized. Mathematically, it solves
\begin{equation}
\label{pcaObjective}
\min_{U^{\top}U=I_k,\textbf{m}} \sum_i  \bigl\|(x_i - \textbf{m}) - UU^{\top}(x_i - \textbf{m})\bigr\|_2^2,
\end{equation}
where \textbf{m} denotes the mean of the training data and $I_k$ denotes a $k$-dimensional identity matrix.
The data mean \textbf{m} can be easily obtained by setting the derivative of Eq. \eqref{pcaObjective} with respect to \textbf{m} to zero, which yields $\textbf{m}= \dfrac{1}{n} \sum\limits_i x_i$. Problem \eqref{pcaObjective} can be equivalently formulated as
\begin{equation}
\label{pcaObjective2}
\max_{U^{\top}U=I_k} \sum_i  \bigl\|U^{\top}(x_i - \textbf{m})\bigr\|_2^2.
\end{equation}
As can be seen that the solutions of above objective functions are dominated by squared large distance, thus are greatly deviated from the real ones.

To improve the robustness of PCA, the $\ell_1$-norm based distance metric is applied \cite{kwak2008principal}, \emph{i.e.}, which solves 
\begin{equation}
\label{pcaObjective3}
    \max_{U^{\top}U=I_k} \sum_i \bigl\|U^{\top}(x_i - \textbf{m})\bigr\|_1.
\end{equation}
However, model \eqref{pcaObjective3} uses incorrect \textbf{m} which is estimated under the squared $\ell_2$-norm distance metric. To handle it, Nie \emph{et al.} \cite{nie2014optimal} integrate mean calculation in the objective function and propose RPCA-OM, \emph{i.e.}, 
\begin{equation}
    \label{pcaObjective4}
    \max_{U^{\top}U=I,\textbf{m} }\sum_i \bigl\|U^{\top}(x_i - \textbf{m})\bigr\|_2.
\end{equation}
Different from previous models, $U$ and \textbf{m} are alternatively updated in \eqref{pcaObjective4}. However, this usually incurs error accumulation. 

Luo \emph{et al.} \cite{luo2016avoiding} further propose RPCA-AOM to avoid mean computation, \emph{i.e.},
\begin{equation}
    \label{pcaObjective5}
    \max_{U^{\top}U=I_k} \sum_{i,j} \bigl\|U^{\top}(x_i - x_j)\bigr\|_1 
\end{equation}
Nevertheless, model \eqref{pcaObjective5} can not produce the optimal solution and lacks theoretical guarantee that $\ell_1$-norm relates to the covariance matrix \cite{liao2018robust}. To tackle this problem, Liao \emph{et al.} \cite{liao2018robust} adopt $\ell_{2,p}$-norm to characterize the geometric structure and propose $L_{2,p}$-RPCA 
\begin{equation}
    \label{pcaObjective6}
    \max_{U^{\top}U=I_k} \sum_{i,j} \bigl\|U^{\top}(x_i-x_j)\bigr\|_2^p.
\end{equation}

Though model \eqref{pcaObjective6} yields impressive performance, it is still potentially sensitive to noise and outliers in the presence of heavy noises and gross errors. In this paper, we propose to improve it by using a novel optimization strategy.

\section{Proposed Method}
\subsection{Motivation and Objective Function}
Human learning begins with easier instances of the task and gradually increase the difficulty level. Inspired by this, we expect to train PCA model in steps. To be precise, at the beginning, only ``simple" samples are included to train the model. Then, ``complex" samples are gradually fed into the model. This process alleviates the effects of noisy points and outliers, thus improves the model's generalization ability \cite{zhang2018self,meng2015objective}.

To accomplish this, we employ the self-paced learning technique. Generally, self-paced learning model is comprised of a fidelity term to measure the complexity of each sample and a regularizer term to impose penalty upon the weights of samples \cite{meng2015objective,guo2019adaptive,zhou2020self}. At the beginning, samples equipped with high weight are rare. As optimization algorithm goes on, the penalty of the regularizer increases and the number of samples with high weight automatically increases.
Therefore, we reach our SPCA model by combining self-paced learning with model \eqref{pcaObjective6}: 
\begin{equation}
    \label{ourObjective}
    \max_{U^{\top}U=I_k, w_i} \sum_{i,j=1}^{n} \bigl\|U^{\top}(x_i-x_j)\bigr\|_2^p w_i+f(w_i,\eta)
\end{equation}
where $w_i$ is the weight of $i_{th}$ sample and $f(w_i, \eta)$ is the regularizer with age parameter $\eta$ which will be explained in the next section. A number of regularizers have been developed in the literature. For instance, a recent one is \cite{jiang2018learn}
\begin{equation}
    f(w_i, \eta) = \eta(w_i\log w_i-w_i)
    \label{regularizer_min}
\end{equation}
where the optimal $w_i$ obtained by taking the derivative of Eq. \eqref{regularizer_min} is an decreasing function w.r.t. loss. Unlike previous methods \cite{meng2015objective,guo2019adaptive,zhou2020self,jiang2018learn}, where regularizers are suitable for minimization problem, our regularizer must be designed for maximization problem. The optimal $w_i$ should increase as the fidelity increases and finally converges to 1 as the fidelity value approaches infinity. To satisfy this property, we design a new regularizer as following:
\begin{equation}
    \begin{aligned}
        f(w_i, \eta) = -& \log(w_i+e^{-1/\eta})^{(w_i+e^{-1/\eta})}\\
                   -& \log(1-w_i)^{(1-w_i)} - \frac{w_i}{\eta}.
    \end{aligned}
    \label{f(w,eta)}
\end{equation}
The rationale behind it is discussed in the next subsection where we discuss how to update the weight $w_i$. 

\begin{figure}[!htbp]
\centering
\includegraphics[width=0.3\textwidth]{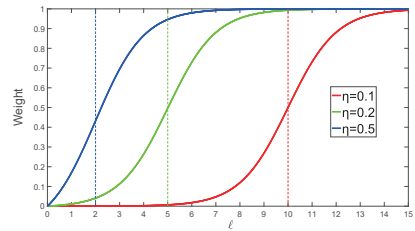} 
\caption{The change of weight $w^*$ with respect to the age parameter $\eta$ and fidelity $\ell$. The dashed line represents $\ell=1/\eta$. Note that we are maximizing our fidelity term $\ell$ so that the weight is an increasing function w.r.t. fidelity value.}
\label{learncurve}
\end{figure}

\subsection{Optimization}
To solve problem \eqref{ourObjective}, we adopt an alternative optimization strategy (AOS), \emph{i.e.}, we iteratively update one parameter while keeping the others fixed.

\textbf{Update $w_i$ for each sample}: 
The optimal weight of the $i$-th sample can be obtained by solving:  
\begin{equation}
\label{wlf}
\max_{w_i\in[0,1]} w_i\ell_i+f(w_i,\eta).
\end{equation}
where the fidelity term is denoted as $\ell_i$, i.e.,
\begin{equation}
\label{loss}
\ell_i= \sum_{j=1}^{n} \bigl\|U^{\top}(x_i-x_j)\bigr\|_2^p .   
\end{equation}

By setting the first-order partial derivative of \eqref{wlf} w.r.t. $w_i$ to 0, we obtain the closed-form solution of $w_i$:
\begin{equation}
\label{updateWi}
w_i^*= \frac{e^{\ell_i-1/\eta}-e^{-1/\eta}}{1+e^{\ell_i-1/\eta}}
\end{equation}
Ignoring the subscripts, we denote $w^*= \dfrac{e^{\ell-1/\eta}-e^{-1/\eta}}{1+e^{\ell-1/\eta}}$, which is a smooth function related to $\ell$. The $w^*$ values under different $\eta$’s are presented in Figure \ref{learncurve}. Given a fixed $\eta$, we obtain $\ell=1/\eta$ by letting the second-order partial derivative of $w^*$ w.r.t. $\ell$ be 0, which indicates that it is reasonable to regard $\ell=1/\eta$ as a threshold
to distinguish ``simple" and ``complex" samples. For those samples with $\ell$ larger than $1/\eta$, the growth of weight become slower w.r.t. $\ell$, hence can be implicitly regarded as ``simple". Otherwise, those samples with $\ell$ less than $1/\eta$ are considered as ``complex" samples.
$w^*$ is monotonically increasing with respect to $\ell$ and it holds that $\lim_{\ell \to 0} w^* = 0$ and $\lim_{\ell \to \infty} w^*=1$, which suggests that ``simple" samples are often preferred by the model because of their larger fidelity values. 
Additionally, it can be seen that when a smaller $1/\eta$ (larger $\eta$) is applied, the growth rate of weights of the complex samples becomes faster, which means that more samples tend to be included in the training process.

\textbf{Update $\ell_i$ for each sample}:
Note that we should update $\ell_i$ before $w_i$ since $w_i$ is dependent on $\ell_i$. We know from Figure \ref{learncurve} that given a fixed $\eta$, $w^*(\ell,\eta)$ has a distinct change only with $\ell$ in a specific interval where $w^*(\ell,\eta)$ increases rapidly. For instance, $w^*(\ell,\eta)$ varies considerably with $\ell$ in the range $[0,10]$ for $\eta=0.2$. In order to effectively distinguish ``simple" samples from ``complex" samples, we normalize the fidelity value of each sample to this ``most varying" range as follows:
\begin{equation}
\label{normalloss}
 \ell_i \coloneqq \frac{c \cdot \ell_i}{\max \{\ell_1, \ell_2, \dots, \ell_n\}}    
\end{equation}
where $c$ represents the normalizing coefficient. 

\textbf{Update $U$}:
 Ignoring term $f(w_i,\eta)$ which is a constant w.r.t. $U$, we have the following subproblem:
\begin{equation}
    \label{alternating}
   \max_{U^{\top}U=I_k} \sum_{i,j} \bigl\|U^{\top}(x_i-x_j)\bigr\|_2^p w_i.
\end{equation}
Define $s_{ij} = \bigl\|U^{\top}(x_i-x_j)\bigr\|_2^{p-2}$, above objective function has the following equivalent formulation: 
\begin{equation}
\label{Induce}
\begin{aligned}
    & \sum_{i,j} \bigl\|U^{\top}(x_i-x_j)\bigr\|_2^p w_i= \sum_{i,j}\bigl\|U^{\top}(x_i-x_j)\bigr\|_2^2 s_{ij} w_i \\
    &= \sum_{i,j}tr\Bigl(\bigl(U^{\top}(x_i-x_j)\bigr)\bigl(U^{\top}(x_i-x_j)\bigr)^{\top}\Bigr)s_{ij} w_i\\
    &= 2tr(U^{\top}XLX^{\top}U)= 2tr(U^{\top}H) \\
\end{aligned}
\end{equation}
where $L = D-S$, $H = XLX^{\top}U$, and $D$ is a diagonal matrix with diagonal elements $d_{ii} = w_i \sum\limits_j s_{ij}$.

By replacing the objective function in \eqref{alternating} with Eq. \eqref{Induce}, the problem turns into
\begin{equation}
    \label{trace}
    \mathop{\max}_{U^{\top}U=I_k}\; tr(U^{\top}H)
\end{equation}
Note that $H$ is dependent upon the projection matrix $U$, which means that Eq. \eqref{trace} cannot be easily solved. To address it, we follow an alternative strategy, i.e., we solve $U$ first by fixing $H$ and then update the value of $H$ using the new $U$. 


\begin{theorem}
\label{theo3}
Denote the compact singular value decomposition (SVD) of $A \in R^{m\times n}$ as $U\Sigma V^{\top}$ (\textit{with} $V^{\top}V=U^{\top}U=I_k, k=rank(A)$), 
    \begin{equation}
        \mathop{\arg\max}_{W^{\top}W=I_k}\; tr(W^{\top}A)    
    \end{equation}
    is $W=UV^{\top}$.
\end{theorem}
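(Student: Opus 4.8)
The plan is to recognize this as the ``baby case'' of von Neumann's trace inequality and argue directly. Write the compact SVD $A=U\Sigma V^{\top}$ with $\Sigma=\mathrm{diag}(\sigma_1,\dots,\sigma_k)$; since $k=rank(A)$, every $\sigma_i>0$. First I would record the bookkeeping: for $tr(W^{\top}A)$ to be defined and for $W^{\top}W=I_k$ to hold, $W$ must have the same number of rows as $A$ and exactly $k$ columns, which also forces $V$ to be a square orthogonal matrix. Then, for any feasible $W$, cyclicity of the trace gives
\[
tr(W^{\top}A)=tr\bigl(\Sigma\,V^{\top}W^{\top}U\bigr)=\sum_{i=1}^{k}\sigma_i Z_{ii},\qquad Z\coloneqq V^{\top}W^{\top}U .
\]

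Next I would bound the diagonal entries of $Z$. Because $W$ has orthonormal columns, $WW^{\top}$ is an orthogonal projection, so $WW^{\top}\preceq I$; combined with $U^{\top}U=I_k$ this yields $\|W^{\top}U\|_2\le 1$, and since $V$ is orthogonal, $\|Z\|_2=\|W^{\top}U\|_2\le 1$. Hence $|Z_{ii}|=|e_i^{\top}Ze_i|\le\|Z\|_2\le 1$ for every $i$, so $tr(W^{\top}A)\le\sum_{i=1}^{k}\sigma_i$. It then remains to check that $W=UV^{\top}$ is feasible and attains this value: $W^{\top}W=VU^{\top}UV^{\top}=VV^{\top}=I_k$, and $tr(W^{\top}A)=tr(VU^{\top}U\Sigma V^{\top})=tr(\Sigma)=\sum_i\sigma_i$. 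This proves $W=UV^{\top}$ is a maximizer.

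For the uniqueness implicit in the statement I would exploit $\sigma_i>0$: equality above forces $Z_{ii}=1$ for all $i$, and any matrix with $\|Z\|_2\le 1$ whose diagonal entries are all $1$ must equal $I_k$ (from $\|Ze_i\|\le 1$ and $(Ze_i)_i=1$ one gets $Ze_i=e_i$). Thus $U^{\top}W=V^{\top}$; setting $N\coloneqq W-UV^{\top}$ gives $U^{\top}N=0$, hence $W^{\top}W=I_k+N^{\top}N$, which forces $N=0$ and $W=UV^{\top}$. I do not expect a genuine obstacle here: the only real content is the spectral-norm bound on $\mathrm{diag}(Z)$, and one could instead simply invoke von Neumann's trace inequality with $\sigma_i(W)=1$ together with its equality condition. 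A Lagrange-multiplier route ($A=W\Gamma$ with $\Gamma$ symmetric positive definite, so $\Gamma=(A^{\top}A)^{1/2}$ and $W=A\Gamma^{-1}=UV^{\top}$) reaches the same point, but it would still need a separate second-order or global argument to confirm that the critical point is the maximum.
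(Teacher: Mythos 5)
Your proof is correct, and it takes a genuinely different route from the paper's. The paper splits $\Sigma=\Sigma^{1/2}\Sigma^{1/2}$ and applies the Cauchy--Schwarz inequality for the trace inner product, $tr(X^{\top}Y)\le\|X\|_F\|Y\|_F$, to $X=\Sigma^{1/2}U^{\top}$ and $Y=\Sigma^{1/2}V^{\top}W^{\top}$, computing both Frobenius norms as $\|\Sigma^{1/2}\|_F$ via $W^{\top}W=I_k$ and reading off the equality condition $\Sigma^{1/2}U^{\top}=\Sigma^{1/2}V^{\top}W^{\top}$ (the displayed ``$=$'' in the paper should of course be ``$\le$''). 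You instead reduce to $tr(W^{\top}A)=\sum_i\sigma_i Z_{ii}$ with $Z=V^{\top}W^{\top}U$ a contraction and bound the diagonal entries by $\|Z\|_2\le 1$ --- the rank-constrained von Neumann argument. Your version buys a few things the paper's terse proof leaves implicit: the dimensional bookkeeping forcing $V$ to be square orthogonal (needed for $W=UV^{\top}$ to be feasible at all), the explicit optimal value $\sum_i\sigma_i$, and a genuine uniqueness argument ($Z_{ii}=1$ with $\|Z\|_2\le1$ forces $Z=I_k$, then $N=W-UV^{\top}$ satisfies $U^{\top}N=0$ and $N^{\top}N=0$), which uses $\sigma_i>0$ from $k=\mathrm{rank}(A)$ exactly where the paper's equality condition implicitly needs $\Sigma^{1/2}$ invertible to cancel it. The paper's route is shorter; yours is more self-contained and settles uniqueness cleanly. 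Both are sound.
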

\begin{proof}
\begin{equation}
\begin{split}
tr(W^{\top}A)=&tr(W^{\top}U\Sigma V^{\top})\\
=&tr(U\Sigma^{1/2}\Sigma^{1/2}V^{\top}W^{\top})
\end{split}
\end{equation}
Based on $tr(X^{\top}Y) \leq \|X\|_F \|Y\|_F $, we have
\begin{equation}
\begin{split}
tr(W^{\top}A)=&\|U\Sigma^{1/2}\|_F\|\Sigma^{1/2}V^{\top}W^{\top}\|_F\\
=&\|\Sigma^{1/2}\|_F\|\Sigma^{1/2}\|_F
\end{split}
\end{equation}
Equality holds only if $\Sigma^{1/2}U^{\top}=\Sigma^{1/2}V^{\top}W^{\top}$, i.e., $W=UV^{\top}$.
    \end{proof}
According to theorem \ref{theo3}, the optimal solution of the objective function \eqref{trace} is
\begin{equation}
\label{solution}
    U = QV^{\top},
\end{equation}
where $Q$ and $V$ are from the SVD of $H$,\emph{ i.e.}, $H=Q\Sigma V^{\top}$, indicating that the solution to problem \eqref{ourObjective} incorporates the geometric structure of data $X$ since $H=XLX^{\top}U$ and $XLX^{\top}$ is an adaptive weighted covariance matrix. The pseudo code for solving problem \eqref{ourObjective} is summarized in the following algorithm. A maximum iteration number 10 is used as the stopping criterion.

\begin{algorithm}[!htbp]
    \caption{Algorithm to solve problem \eqref{ourObjective}}
    \begin{algorithmic}[1]
        \REQUIRE ~~\\
        1. Dataset $ \{x_i\in \mathcal{R}^d : i = 1, 2, \dots, n \}$, where $x_i$ is normalized; Predefined subspace dimension $k$; Predefined parameter $p$ and $\eta$;
        \ENSURE $U \in \mathcal{R}^{d\times k}$; 
        \STATE Initialize $U$ which satisfies $U^{\top}U = I_k$;
        \WHILE{not converge}
            \STATE Update $\ell$ with Eq. \eqref{loss};
            \STATE Normalize $\ell$ according to \eqref{normalloss};
            \STATE Update $w_i$ with Eq. \eqref{updateWi};
                \WHILE {not converge}
                    \STATE Calculate $H = XLX^{\top}U$;
                    \STATE Calculate the SVD of matrix $H$ by $H = Q\Sigma V^{\top}$;
                   \STATE Calculate $U = QV^{\top}$;
                \ENDWHILE
        \ENDWHILE
    \end{algorithmic}
\end{algorithm}

\subsection{Theoretical Analysis}
In this subsection, some important properties of SPCA are provided by analyzing our optimization strategy and the objective function.
We define $F_{\eta}(\ell)$ as the integration of $w^*(\ell,\eta)$ with respect to $\ell$:
\begin{equation}
\label{F}
    F_{\eta}(\ell) = \int_0^{\ell} w^*(l,\eta)dl
\end{equation}

Then we define $Q_{\eta}(U|U^*)$ as the first-order expansion of $F_{\eta}(\ell(U))$ at $\ell(U^*)$:
\begin{equation}
\label{Q}
    Q_{\eta}(U|U^*) = F_{\eta}(\ell(U^*)) + w^*(\ell(U^*),\eta)(\ell(U)-\ell(U^*))
\end{equation}

\begin{lemma}
\label{f<q}
    Given $\eta$ and $U^*$, the following inequality holds 
    \begin{equation}
        F_{\eta}(\ell(U)) \geq Q_{\eta}(U|U^*).
    \end{equation}
\end{lemma}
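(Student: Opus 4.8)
The plan is to recognize that the claimed inequality is exactly the tangent-line characterization of convexity for the single-variable function $F_{\eta}$, viewed as a function of the scalar $\ell$. So the task reduces to two things: showing that $F_{\eta}$ is convex on $[0,\infty)$, and then specializing the generic convexity inequality $F_{\eta}(\ell) \geq F_{\eta}(\ell_0) + F_{\eta}'(\ell_0)(\ell - \ell_0)$ to $\ell = \ell(U)$ and $\ell_0 = \ell(U^*)$. Note that $F_{\eta}'(\ell_0) = w^*(\ell_0,\eta)$ follows immediately from the definition \eqref{F} and the fundamental theorem of calculus, so the right-hand side becomes precisely $Q_{\eta}(U|U^*)$ as defined in \eqref{Q}.

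For the convexity of $F_{\eta}$, since $F_{\eta}$ is differentiable with $F_{\eta}'(\ell) = w^*(\ell,\eta)$, it suffices to check that $w^*(\cdot,\eta)$ is nondecreasing in $\ell$ — i.e. $F_{\eta}'' = \partial w^*/\partial \ell \geq 0$. This was already observed in the discussion following \eqref{updateWi}, and a one-line check confirms it: writing $t = e^{\ell - 1/\eta} > 0$ and $a = e^{-1/\eta}$, one has $w^* = (t-a)/(1+t) = 1 - (1+a)/(1+t)$, which increases as $t$, hence as $\ell$, increases.

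An alternative that avoids invoking the convexity theorem as a black box is to argue directly from \eqref{F}. If $\ell(U) \geq \ell(U^*)$, then $F_{\eta}(\ell(U)) - F_{\eta}(\ell(U^*)) = \int_{\ell(U^*)}^{\ell(U)} w^*(l,\eta)\,dl \geq w^*(\ell(U^*),\eta)\,(\ell(U) - \ell(U^*))$, because by monotonicity the integrand is at least $w^*(\ell(U^*),\eta)$ throughout the interval of integration. If instead $\ell(U) < \ell(U^*)$, the integrand is at most $w^*(\ell(U^*),\eta)$ on $[\ell(U),\ell(U^*)]$, but the interval length also flips sign, so the same bound $F_{\eta}(\ell(U)) - F_{\eta}(\ell(U^*)) \geq w^*(\ell(U^*),\eta)(\ell(U) - \ell(U^*))$ emerges. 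Either way we conclude $F_{\eta}(\ell(U)) \geq Q_{\eta}(U|U^*)$.

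The only real content here is the monotonicity of $w^*$ in $\ell$; everything else is the routine tangent-line inequality. Since that monotonicity is elementary and already stated, I expect no genuine obstacle — the proof is essentially two lines once convexity of $F_{\eta}$ is noted. The single point requiring care is that $\ell(U)$ and $\ell(U^*)$ may lie on either side of one another, so the argument must not implicitly assume $\ell(U) \geq \ell(U^*)$; handling both orderings explicitly (or simply citing convexity, which is order-agnostic) disposes of this.
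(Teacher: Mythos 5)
Your proposal is correct and follows essentially the same route as the paper: identify $F_{\eta}' = w^*$ via the definition \eqref{F}, deduce convexity of $F_{\eta}$ from the monotonicity of $w^*(\cdot,\eta)$, and apply the tangent-line (first-order Taylor) lower bound at $\ell(U^*)$. Your explicit verification of the monotonicity of $w^*$ and the case analysis on the ordering of $\ell(U)$ and $\ell(U^*)$ only add detail that the paper leaves implicit.
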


 \begin{proof}
     Since $w^*(\ell, \eta)$ is continuous with respect to $\ell$ on $[0,\infty)$, based on Eq. \eqref{F}, we have
    \begin{equation}
       w^*(\ell, \eta) = F_{\eta}^{'}(\ell)
   \end{equation}
    By Eq. \eqref{updateWi}, $w^*(\ell, \eta)$ is monotonically increasing with respect to $\ell$ on $[0,\infty)$. Thus, $F_{\eta}(\ell)$ is convex on $[0,\infty)$. Therefore, the first order Taylor expansion of $F_{\eta}(\ell)$ at $\ell(U^*)$, namely $Q_{\eta}(U|U^*)$, forms a lower bound of $F_{\eta}(\ell)$. Thus, we obtain
    \begin{equation*}
    \begin{split}
       F_{\eta}(\ell(U)) \geq& F_{\eta}(\ell(U^*)) + w^*(\ell(U^*),\eta)(\ell(U)-\ell(U^*))\\
     &=Q_{\eta}(U|U^*)
        \end{split}
   \end{equation*}
   \end{proof}

Though there are two variables in our objective function \eqref{ourObjective}, we show that SPCA simply maximizes an implicit objective function where $w$ completely disappears.
\begin{theorem}
\label{AOS}
    Given fixed $\eta$, the alternative optimization strategy (AOS) to maximize Eq. \eqref{ourObjective} is equivalent to the minorize-maximization (MM) \cite{lange2000optimization} algorithm for solving \begin{equation}
    \label{surrogate}
        \max \sum_{i=1}^n F_{\eta}(\ell_i (U)).
    \end{equation}
    
\end{theorem}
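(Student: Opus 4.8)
The plan is to pass through the partial maximization over the weights: first show that maximizing \eqref{ourObjective} over $w$ with $U$ fixed turns it (up to an additive constant) into the single-variable objective \eqref{surrogate}, and then recognize the two AOS updates as exactly the ``minorize'' and ``maximize'' steps of an MM scheme applied to \eqref{surrogate}.

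For the first part I would establish the identity
\[
\max_{w\in[0,1]}\bigl(w\ell+f(w,\eta)\bigr)=F_{\eta}(\ell)+C,
\]
where $C$ does not depend on $\ell$ and the maximizer is $w=w^{*}(\ell,\eta)$ of \eqref{updateWi}. Writing $h(\ell)=w^{*}(\ell,\eta)\,\ell+f(w^{*}(\ell,\eta),\eta)$ and differentiating by the chain rule gives $h'(\ell)=w^{*}(\ell,\eta)+\frac{dw^{*}}{d\ell}\bigl(\ell+\partial_{w}f(w^{*}(\ell,\eta),\eta)\bigr)$, and the bracket vanishes because $w^{*}$ was \emph{defined} as the stationary point of $w\ell+f(w,\eta)$ (this is just the envelope theorem). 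Hence $h'(\ell)=w^{*}(\ell,\eta)=F_{\eta}'(\ell)$, so $h$ and $F_{\eta}$ differ by a constant, and evaluating at $\ell=0$ (where $w^{*}(0,\eta)=0$ and $F_{\eta}(0)=0$) fixes $C=f(0,\eta)$. Summing over the $n$ samples, $\max_{w_{i}}$ of \eqref{ourObjective} equals $\sum_{i}F_{\eta}(\ell_{i}(U))+nC$, so \eqref{ourObjective} and \eqref{surrogate} have the same set of $U$-maximizers.

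For the second part I would write out one AOS round at the current iterate $U^{(t)}$: the $w$-update sets $w_{i}^{(t)}=w^{*}(\ell_{i}(U^{(t)}),\eta)$, and the $U$-update then solves $\max_{U^{\top}U=I_{k}}\sum_{i}w_{i}^{(t)}\ell_{i}(U)$ after dropping the $U$-independent $f$ terms. Introduce $\widehat{Q}_{\eta}(U\mid U^{(t)})=\sum_{i}Q_{\eta}(U\mid U^{(t)})$, the termwise first-order expansion \eqref{Q} of $\sum_{i}F_{\eta}(\ell_{i}(U))$ at $U^{(t)}$. By Lemma \ref{f<q} applied termwise, $\sum_{i}F_{\eta}(\ell_{i}(U))\ge\widehat{Q}_{\eta}(U\mid U^{(t)})$ with equality at $U=U^{(t)}$, so $\widehat{Q}_{\eta}(\cdot\mid U^{(t)})$ is a valid minorizer of \eqref{surrogate} that is tight at $U^{(t)}$, and it is built entirely from the quantities produced by the $w$-update. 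Since $\widehat{Q}_{\eta}(U\mid U^{(t)})$ differs from $\sum_{i}w^{*}(\ell_{i}(U^{(t)}),\eta)\,\ell_{i}(U)=\sum_{i}w_{i}^{(t)}\ell_{i}(U)$ only by terms independent of $U$, maximizing the minorizer is literally the AOS $U$-update \eqref{alternating}. Therefore each AOS round executes one minorize step and one maximize step of MM on \eqref{surrogate}, which is the assertion.

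The main obstacle is the first step: confirming that the inner maximization over $w$ reproduces $F_{\eta}$ up to a constant. The chain-rule/envelope argument above is the clean route, but it relies on $w^{*}(\ell,\eta)$ lying in the open interval $(0,1)$ for every $\ell\in(0,\infty)$ — so that stationarity, not a boundary condition, characterizes the maximizer — and on $f(\cdot,\eta)$ being differentiable there; both must be read off from the explicit forms \eqref{f(w,eta)} and \eqref{updateWi}. One also checks that $F_{\eta}$ is well defined (continuity of $w^{*}$ in $\ell$) and notes that the normalization step \eqref{normalloss} is harmless, as it only rescales the fidelities before each round. Everything after that is bookkeeping: matching the AOS updates to the MM template and citing Lemma \ref{f<q}.
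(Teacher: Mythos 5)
Your proof is correct, and its core --- identifying the AOS $w$-update with the MM minorize step via Lemma \ref{f<q}, and the AOS $U$-update with maximization of the termwise linearization $Q_{\eta}(\cdot\mid U^{t})$, which differs from $\sum_i w^*(\ell_i(U^t),\eta)\,\ell_i(U)$ only by $U$-independent terms --- is exactly the paper's argument. The genuine addition is your first step: the envelope-theorem computation showing $\max_{w\in[0,1]}\bigl(w\ell+f(w,\eta)\bigr)=F_{\eta}(\ell)+f(0,\eta)$, so that profiling out the weights turns \eqref{ourObjective} into \eqref{surrogate} up to an additive constant. The paper never proves this; it only writes the (not quite well-formed) chain $w^*(\ell_i(U^{t}),\eta)=\arg\max F_{\eta}(\ell_i(U^{t}))=\arg\max_{w_i} w_i\ell_i(U^{t})+f(w_i,\eta)$, which silently relies on the maximizer of the inner problem being $F_{\eta}'(\ell)=w^*(\ell,\eta)$. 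Your version makes that link explicit, and the side conditions you flag do hold: $w^*(\ell,\eta)\in(0,1)$ for all $\ell>0$ (the numerator of \eqref{updateWi} is positive and strictly smaller than the denominator since $-e^{-1/\eta}<1$), and $f(\cdot,\eta)$ is strictly concave on $(0,1)$ (its second derivative is $-\tfrac{1}{w+e^{-1/\eta}}-\tfrac{1}{1-w}<0$), so the stationary point is the unique interior maximizer. This extra step buys the stronger conclusion that the two problems share the same $U$-maximizers, not merely that the two algorithms generate the same iterates.
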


\begin{proof}
    Denote $U^{t}$ as the projection matrix in the $t^{th}$ iteration of the AOS for solving \eqref{ourObjective}. By the standard MM algorithm, we obtain the optimization step as follows:
    
    \textbf{Minorize step:} Based on Lemma \ref{f<q}, $Q_{\eta}^{i}(U|U^{t})$ is a surrogate function of $F_{\eta}(\ell_i(U^{t}))$ for problem \eqref{surrogate}. It is easy to see that for fixed $U^{t}$, $Q_{\eta}^{i}(U|U^{t})$ only depends on $w^*(\ell_i(U^{t}), \eta)$. To obtain $Q_{\eta}^{i}(U|U^{t})$, we calculate $w^*(\ell_i(U^{t}), \eta)$ by maximizing $F_{\eta}(\ell_i(U^{t}))$:
    \begin{equation*}
    \begin{aligned}
        w^*(\ell_i(U^{t}), \eta)&=\mathop{\arg\max}F_{\eta}(\ell_i(U^{t}))
        \\&=\mathop{\arg\max}_{w_i \in [0,1]} w_i\ell_i(U^{t})+f(w_i,\eta),
    \end{aligned}
    \end{equation*}
    which is the same as the first step of AOS that updates $w$ under fixed $U$.
    
    \textbf{Maximize step:} We update $U$ under surrogate function $Q_{\eta}^{i}(U|U^{t})$ by
    \begin{equation*}
    \begin{aligned}
       U^{t+1}&=\mathop{\arg\max}_{U}\sum_{i=1}^{n} Q_{\eta}^{i}(U|U^{t})
       \\
       &=\!\mathop{\arg\max}_{U}\sum_{i=1}^{n}F_{\eta}(\ell_i(U^{t}))\! +\! w^*(\ell_i(U^{t}),\eta)(\ell_i(U)\!-\!\ell_i(U^{t}))\!  \\
                &= \mathop{\arg\max}_{U} \sum_{i=1}^{n} w^*(\ell_i(U^{t}),\eta)\ell_i(U),
    \end{aligned}
    \end{equation*}
    which exactly corresponds to the second step of AOS in updating $U$ under fixed $w$.
\end{proof}

Therefore, the AOS used in our algorithm is actually equivalent to the well-known MM algorithm. Then, many conclusions of MM theory hold true for AOS. For instance, as iterations progress, the upper-bounded objective value of our model is monotonically increasing, which ensures the convergence of our algorithm. Moreover, with the surrogate function $F_{\eta}(\ell)$, we show in the following theorem that our model is robust to complex samples. 
\begin{theorem}
    \label{robustness}
    Suppose that $\max_k \ell_k<M$ and $M<\infty$, for any pair of different samples $(i,j)$ in training dataset:
    \begin{equation}
        \Bigl|F_{\eta}(\ell_i)-F_{\eta}(\ell_j)\Bigr| \leq w(M,\eta)\Bigl|\ell_i-\ell_j\Bigr|.
    \end{equation}
\end{theorem}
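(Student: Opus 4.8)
The plan is to read this as a Lipschitz-continuity statement for $F_{\eta}$ on the interval $[0,M)$ and to exploit the fact, already established in the proof of Lemma \ref{f<q}, that $F_{\eta}$ is differentiable with $F_{\eta}'(\ell)=w^*(\ell,\eta)$ and that $w^*(\cdot,\eta)$ is monotonically increasing on $[0,\infty)$. First I would note that each fidelity value $\ell_i=\sum_{j}\|U^{\top}(x_i-x_j)\|_2^p$ is nonnegative, and by hypothesis $\ell_i,\ell_j\in[0,M)$. Without loss of generality assume $\ell_j\le\ell_i$.

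Next I would apply the mean value theorem to $F_{\eta}$ on $[\ell_j,\ell_i]$: there exists $\xi\in[\ell_j,\ell_i]$ such that
\begin{equation*}
F_{\eta}(\ell_i)-F_{\eta}(\ell_j)=F_{\eta}'(\xi)\,(\ell_i-\ell_j)=w^*(\xi,\eta)\,(\ell_i-\ell_j).
\end{equation*}
Since $0\le\xi\le\ell_i\le\max_k\ell_k<M$ and $w^*(\cdot,\eta)$ is increasing, we have $0\le w^*(\xi,\eta)\le w^*(M,\eta)=w(M,\eta)$. Taking absolute values of both sides and using $|\ell_i-\ell_j|=\ell_i-\ell_j$ then yields
\begin{equation*}
\bigl|F_{\eta}(\ell_i)-F_{\eta}(\ell_j)\bigr|=w^*(\xi,\eta)\bigl|\ell_i-\ell_j\bigr|\le w(M,\eta)\bigl|\ell_i-\ell_j\bigr|,
\end{equation*}
which is the claimed inequality. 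If the normalization step \eqref{normalloss} is in force one could alternatively take $M=c$, but the argument does not depend on the particular value of $M$.

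Honestly there is no serious obstacle here: the only things to be careful about are (i) justifying that $F_{\eta}$ is $C^1$ so that the mean value theorem applies — which follows from $w^*(\ell,\eta)$ being a smooth function of $\ell$, as remarked after Eq. \eqref{updateWi} — and (ii) invoking monotonicity of $w^*$ in the correct direction to replace $w^*(\xi,\eta)$ by its supremum $w^*(M,\eta)$ on $[0,M)$. A minor point worth stating explicitly is $w^*\ge 0$ on $[0,\infty)$ (immediate from $\lim_{\ell\to 0}w^*=0$ together with monotonicity), which is what lets us drop the absolute value on the factor $w^*(\xi,\eta)$. Everything else is routine.
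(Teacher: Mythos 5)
Your proposal is correct and follows essentially the same route as the paper: apply the mean value theorem to $F_{\eta}$ using $F_{\eta}'=w^*(\cdot,\eta)$, then bound the derivative at the intermediate point by $w(M,\eta)$ via monotonicity. You are in fact slightly more explicit than the paper in justifying the final bound (non-negativity and monotonicity of $w^*$), but the argument is the same.
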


\begin{proof}
    Let $a=\min \{\ell_i, \ell_j\}$, $b=\max \{\ell_i, \ell_j\}$. By Lagrange's mean value theorem, we have
    \begin{equation}
        F_{\eta}(\ell_i)-F_{\eta}(\ell_j) = \frac{\partial F_{\eta}(\ell)}{\partial \ell}\Bigr|_{\ell=\xi}(\ell_i-\ell_j),
    \end{equation}
    where $\xi \in [a,b]$.
    Thus, we further deduce that
    \begin{equation}
        \begin{split}
            \Bigl|F_{\eta}(\ell_i)-F_{\eta}(\ell_j)\Bigr|& \leq 
                  \bigr(\mathop{sup}_{\ell \in [a,b]} \bigl|w(\ell,\eta)\bigr|\bigr)\Bigl|\ell_i-\ell_j\Bigr|\\
            &    \leq w(M,\eta)\Bigl|\ell_i-\ell_j\Bigr|.
                  \end{split}
    \end{equation}
\end{proof}
Theorem \ref{robustness} indicates that $F_{\eta}(\ell(\cdot))$ is more robust than original $\ell(\cdot)$ to complex instances with small $\ell$ value. We show this by comparing the fidelity difference between two samples $i$ and $j$, where $i$ is a complex sample and $j$ is a simple one. Since $w(M,\eta)<1$, the difference $\Bigl| F_{\eta}(\ell_i) - F_{\eta}(\ell_j)\Bigr|$ in SPCA is smaller than the original fidelity difference $\Bigl|\ell_i-\ell_j\Bigr|$. Hence, we can see that $F_{\eta}(\ell(\cdot))$ is less sensitive toward complex samples and SPCA less prone to overfit noised data points.
\begin{figure}[!htbp]
\centering
\includegraphics[width=0.4\textwidth]{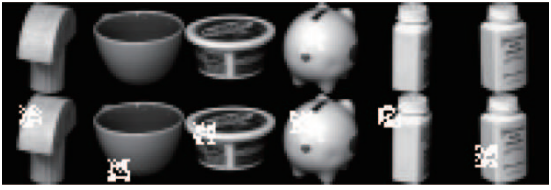} 
\caption{Samples selected from the COIL20 dataset. Original images are shown in the first row while noised images in the second row.}
\label{coil}
\end{figure}

\begin{figure}[!htbp]
\centering
\includegraphics[width=0.4\textwidth]{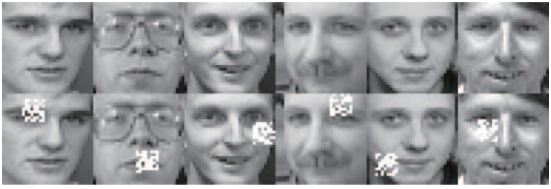} 
\caption{Some samples from the ORL dataset. The second row is noised images. }
\label{orl}
\end{figure}

\begin{figure}[!htbp]
\centering
\includegraphics[width=0.4\textwidth]{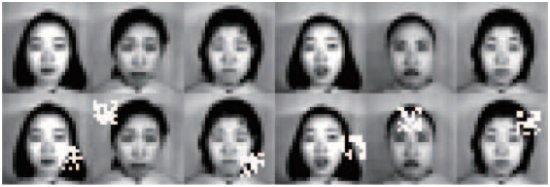} 
\caption{Some samples from the JAFFE dataset. The second row is noised images.}
\label{jaffe}
\end{figure}

\section{Experiments}

In this section, we compare the reconstruction error, reconstructed image and eigenfaces of our method SPCA ($p=$0.5, 1, and 1.5) with optimal mean robust PCA (RPCA-OM) \cite{nie2014optimal}, avoiding optimal mean robust PCA (RPCA-AOM) \cite{luo2016avoiding}, avoiding mean calculation with $\ell_{2,p}$-norm robust PCA ($L_{2,p}$-RPCA) \cite{liao2018robust}, and $\ell_1$-norm based RPCA via bit flipping ($L_1$-PCA) \cite{markopoulos2016l1}. Then we analyze the effect of the parameters $\eta$ and $p$ in our proposed approach and visualize the weight of each sample as our algorithm iterates.

\subsection{Experimental Setup}
Three databases (COIL20, ORL, and JAFFE) are utilized for the experiments. Specifically, the COIL20 database includes 1440 gray-scale images of 20 objects (72 images per object) \cite{nene1996columbia}. Each object is placed at the center of a mechanical turntable that is then rotated to vary the angel of object with respect to a fixed camera. The ORL database contains 400 images of 40 distinct subjects with the resolution 112x92 \cite{Samaria1994ParameterisationOA}. The images are taken at different conditions, such as facial expressions (open/closed eyes, smiling), facial details (glasses), and lighting, against a dark homogeneous background with the individuals standing upright. The JAFFE dataset consists of 213 images of 7 facial expressions (6 basic facial expressions + 1 neutral) posed by 10 Japanese female models \cite{lyons1998japanese}. 

Following the previous work \cite{luo2016avoiding,liao2018robust}, data points are all normalized and 30\% of them are randomly selected and placed a 1/4 side length square occlusion at random positions.
Some sample images are shown in Figures \ref{coil}-\ref{jaffe}. We randomly select half of the images from each class as the training data, then the remaining ones are used for testing. We fix $\eta=0.1$ for our method and set the corresponding normalizing coefficient $c=15$ for Eq. \eqref{normalloss}.
We first use the widely used metric, i.e., the average reconstruction error, to evaluate the dimension reduction effect: 
\begin{equation}
    \label{error}
    e = \frac{1}{n} \sum_{i=1}^{n} \|x_i^{clean} - UU^{\top}x_i^{clean}\|_2
\end{equation}
where $n$ is the number of testing images, $x_i^{clean}$ is the $i$-th clean testing image.


\begin{table*}[htbp]
  \centering
  \renewcommand{\arraystretch}{1.1}
  \caption{ Reconstruction error versus different dimensions of five methods on three databases. The best reconstruction result under each dimension is bolded.  }\smallskip
  \resizebox{0.95\textwidth}{!}{
    \begin{tabular}{|c|c|c|c|c|c|c|c|c|c|c|}
    \hline
    \multirow{10}{*}{JAFFE} & Dimension & 10    & 15    & 20    & 25    & 30    & 35    & 40    & 45    & 50 \\
\cline{2-11}          & RPCA-OM & 0.4535  & 0.4735  & 0.4649  & 0.4550  & 0.4157  & 0.3607  & 0.4251  & 0.4233  & 0.3990  \\
\cline{2-11}          & RPCA-AOM & 0.4672  & 0.4734  & 0.4591  & 0.4529  & 0.4170  & 0.3572  & 0.4075  & 0.4263  & 0.3979  \\
\cline{2-11}          & $L_{2,p}$-RPCA($p$=0.5) & 0.4708  & 0.4637  & 0.4416  & 0.4284  & 0.4004  & 0.3592  & 0.4014  & 0.4267  & 0.4010  \\
\cline{2-11}          & $L_{2,p}$-RPCA($p$=1.0) & 0.4745  & 0.4697  & 0.4623  & 0.4499  & 0.4164  & 0.3585  & 0.4260  & 0.4279  & 0.4016  \\
\cline{2-11}          & $L_{2,p}$-RPCA($p$=1.5) & 0.4755  & 0.4717  & 0.4631  & 0.4514  & 0.4172  & 0.3589  & 0.4133  & 0.4293  & 0.4032  \\
\cline{2-11}          & $L_1$-PCA & 0.5825  & 0.4920  & 0.5511  & 0.5363  & 0.5363  & 0.4773  & 0.5373  & 0.4996  & 0.5216  \\
\cline{2-11}          & SPCA($p$=0.5) & 0.4098  & 0.4180  & 0.4077  & 0.3998  & \textbf{0.3503 } & \textbf{0.2451 } & 0.3682  & \textbf{0.3426 } & \textbf{0.2532 } \\
\cline{2-11}          & SPCA($p$=1.0) & \textbf{0.4073 } & \textbf{0.4173 } & 0.4072  & \textbf{0.3981 } & 0.3584  & 0.2978  & 0.3629  & 0.3612  & 0.3250  \\
\cline{2-11}          & SPCA($p$=1.5) & 0.4238  & 0.4405  & \textbf{0.4067 } & 0.4054  & 0.3594  & 0.3026  & \textbf{0.3595 } & 0.3567  & 0.3384  \\
    \hline
    \multirow{10}{*}{ORL} & Dimension & 10    & 15    & 20    & 25    & 30    & 35    & 40    & 45    & 50 \\
\cline{2-11}          & RPCA-OM & 0.4709  & 0.4127  & 0.4165  & 0.3898  & 0.3944  & 0.4284  & 0.3971  & 0.3985  & 0.3632  \\
\cline{2-11}          & RPCA-AOM & 0.4145  & 0.4171  & 0.4335  & 0.4002  & 0.4066  & 0.4260  & 0.3970  & 0.3949  & 0.3555  \\
\cline{2-11}          & $L_{2,p}$-RPCA($p$=0.5) & 0.4843  & 0.4172  & 0.4140  & 0.3745  & 0.3794  & 0.3885  & 0.3482  & 0.3577  & 0.3241  \\
\cline{2-11}          & $L_{2,p}$-RPCA($p$=1.0) & 0.4520  & 0.4142  & 0.4123  & 0.3816  & 0.4064  & 0.4216  & 0.3915  & 0.3938  & 0.3573  \\
\cline{2-11}          & $L_{2,p}$-RPCA($p$=1.5) & 0.4480  & 0.4103  & 0.4269  & 0.3878  & 0.4045  & 0.4243  & 0.3958  & 0.3983  & 0.3617  \\
\cline{2-11}          & $L_1$-PCA & 0.5107  & 0.3979  & 0.4980  & 0.4019  & 0.4383  & 0.4837  & 0.4948  & 0.5053  & 0.4809  \\
\cline{2-11}          & SPCA($p$=0.5) & 0.3951  & 0.3685  & 0.3692  & 0.3317  & 0.3359  & 0.3665  & 0.3215  & 0.3304  & 0.3022  \\
\cline{2-11}          & SPCA($p$=1.0) & \textbf{0.3042 } & 0.2972  & 0.3096  & 0.2846  & 0.2979  & 0.3272  & 0.2958  & 0.2985  & 0.2690  \\
\cline{2-11}          & SPCA($p$=1.5) & 0.3269  & \textbf{0.2928 } & \textbf{0.3005 } & \textbf{0.2794 } & \textbf{0.2899 } & \textbf{0.3148 } & \textbf{0.2900 } & \textbf{0.2904 } & \textbf{0.2628 } \\
    \hline
    \multirow{10}{*}{COIL20} & Dimension & 10    & 15    & 20    & 25    & 30    & 35    & 40    & 45    & 50 \\
\cline{2-11}          & RPCA-OM & 3.2359  & 2.8202  & 2.6584  & 2.4717  & 2.2741  & 2.1419  & 1.9846  & 2.0064  & 1.9162  \\
\cline{2-11}          & RPCA-AOM & 3.1500  & 2.7807  & 2.6277  & 2.4729  & 2.2768  & 2.1349  & 2.0354  & 2.0147  & 1.9644  \\
\cline{2-11}          & $L_{2,p}$-RPCA($p$=0.5) & 3.0782  & 2.7657  & 2.6256  & 2.4392  & 2.2392  & 2.1405  & 2.0067  & 1.9979  & 1.9351  \\
\cline{2-11}          & $L_{2,p}$-RPCA($p$=1.0) & 3.0458  & 2.7683  & 2.6300  & 2.4512  & 2.2745  & 2.1391  & 2.0135  & 2.0091  & 1.9442  \\
\cline{2-11}          & $L_{2,p}$-RPCA($p$=1.5) & 3.1345  & 2.7823  & 2.6410  & 2.4441  & 2.2898  & 2.1444  & 2.0190  & 2.0148  & 1.9458  \\
\cline{2-11}          & $L_1$-PCA & 4.3870  & 4.4498  & 4.4861  & 4.4874  & 4.4678  & 4.3207  & 4.4206  & 4.1474  & 4.4257  \\
\cline{2-11}          & SPCA($p$=0.5) & 3.0844  & 2.7577  & 2.5985  & 2.4440  & 2.2540  & 2.1322  & 2.0021  & 2.0002  & 1.9236  \\
\cline{2-11}          & SPCA($p$=1.0) & 3.0294  & 2.7126  & 2.5391  & 2.3961  & 2.2100  & 2.0919  & \textbf{1.9693 } & 2.0044  & 1.9175  \\
\cline{2-11}          & SPCA($p$=1.5) & \textbf{3.0124 } & \textbf{2.6657 } & \textbf{2.4885 } & \textbf{2.3721 } & \textbf{2.1957 } & \textbf{2.0625 }   & 1.9726  & \textbf{1.9714} &\textbf{1.8974 } \\
    \hline
    \end{tabular}%
}    
  \label{main_result}%
\end{table*}

\subsection{Results}




\begin{figure*}[!htbp]
\centering
\includegraphics[width=0.3\textwidth]{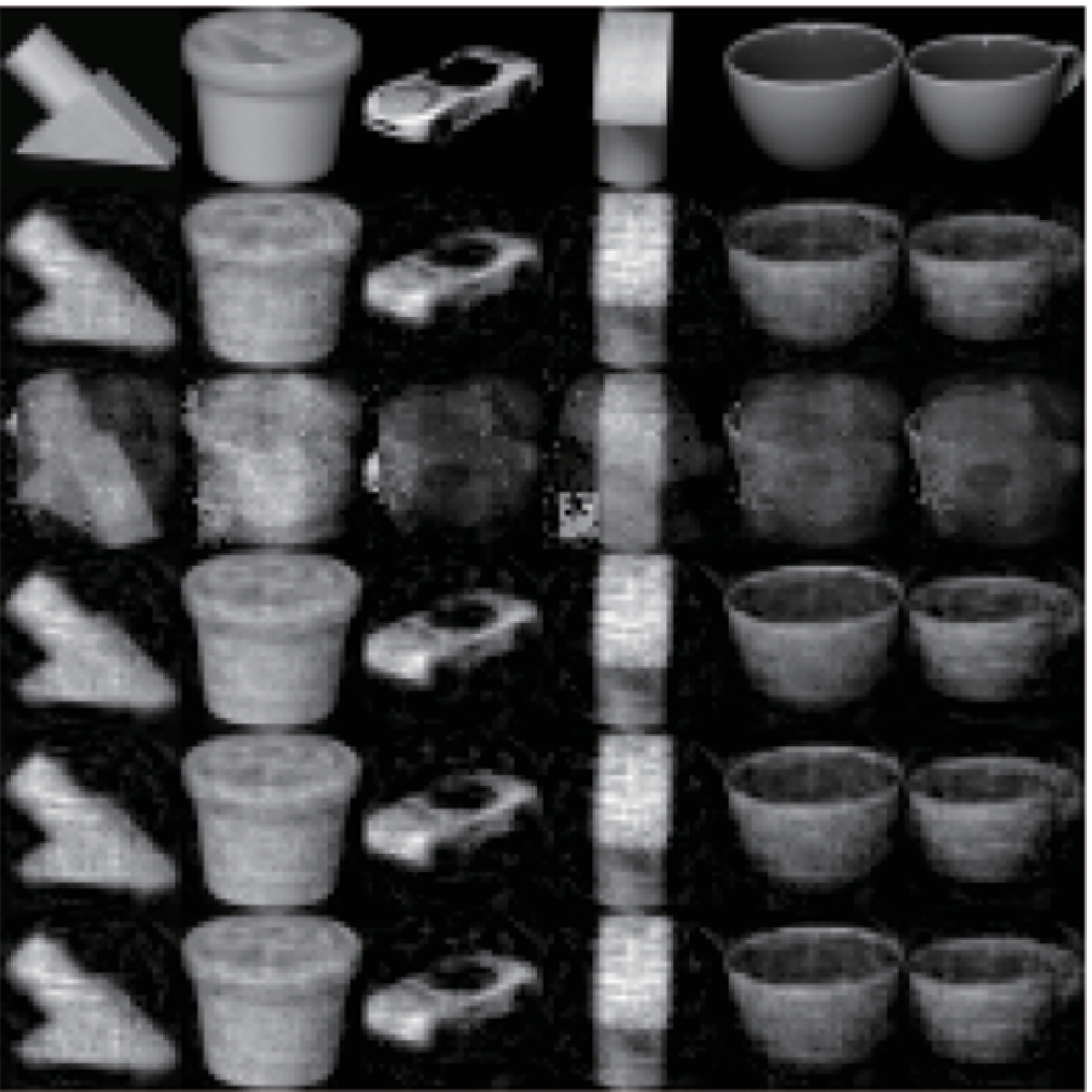}
\includegraphics[width=0.3\textwidth]{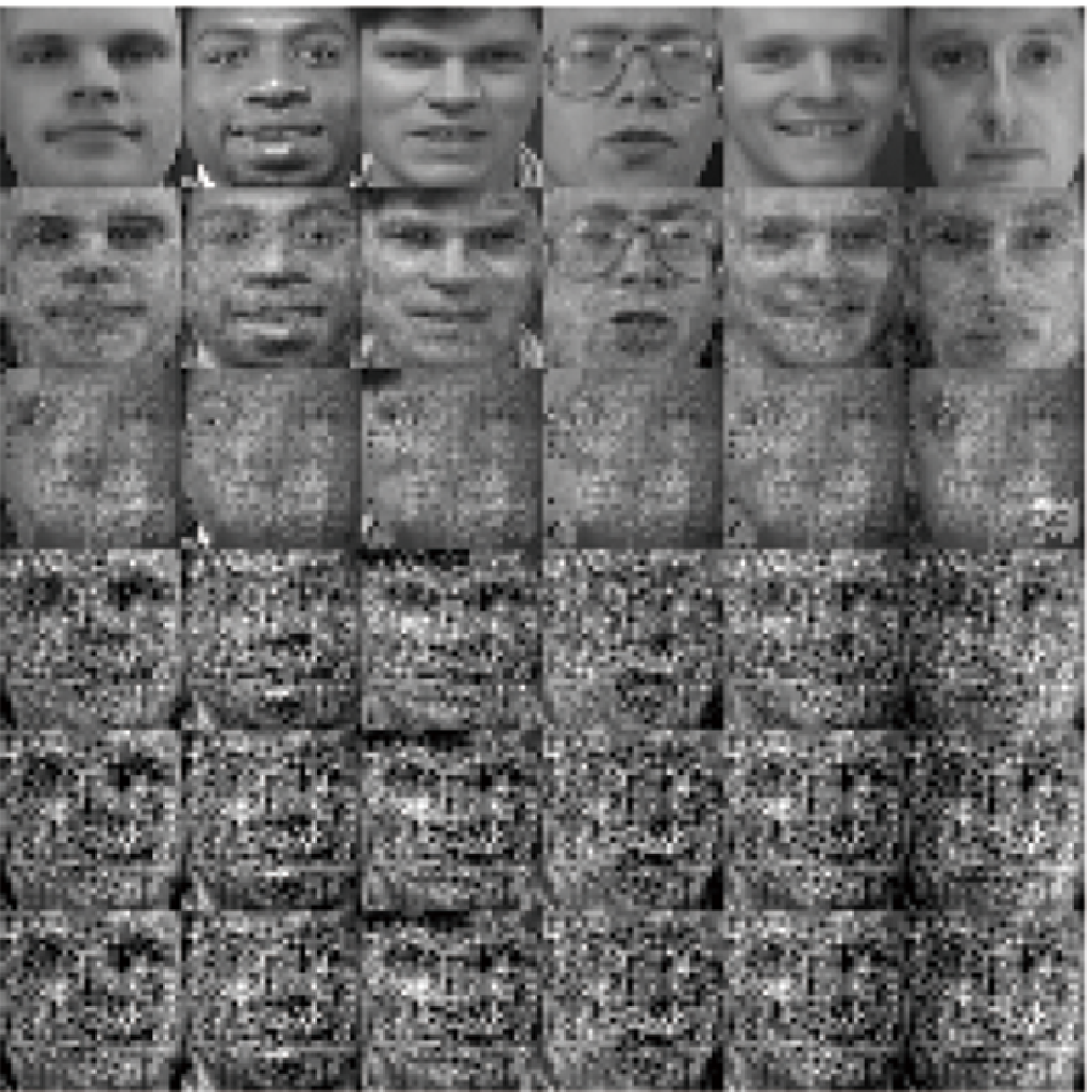}
\includegraphics[width=0.3\textwidth]{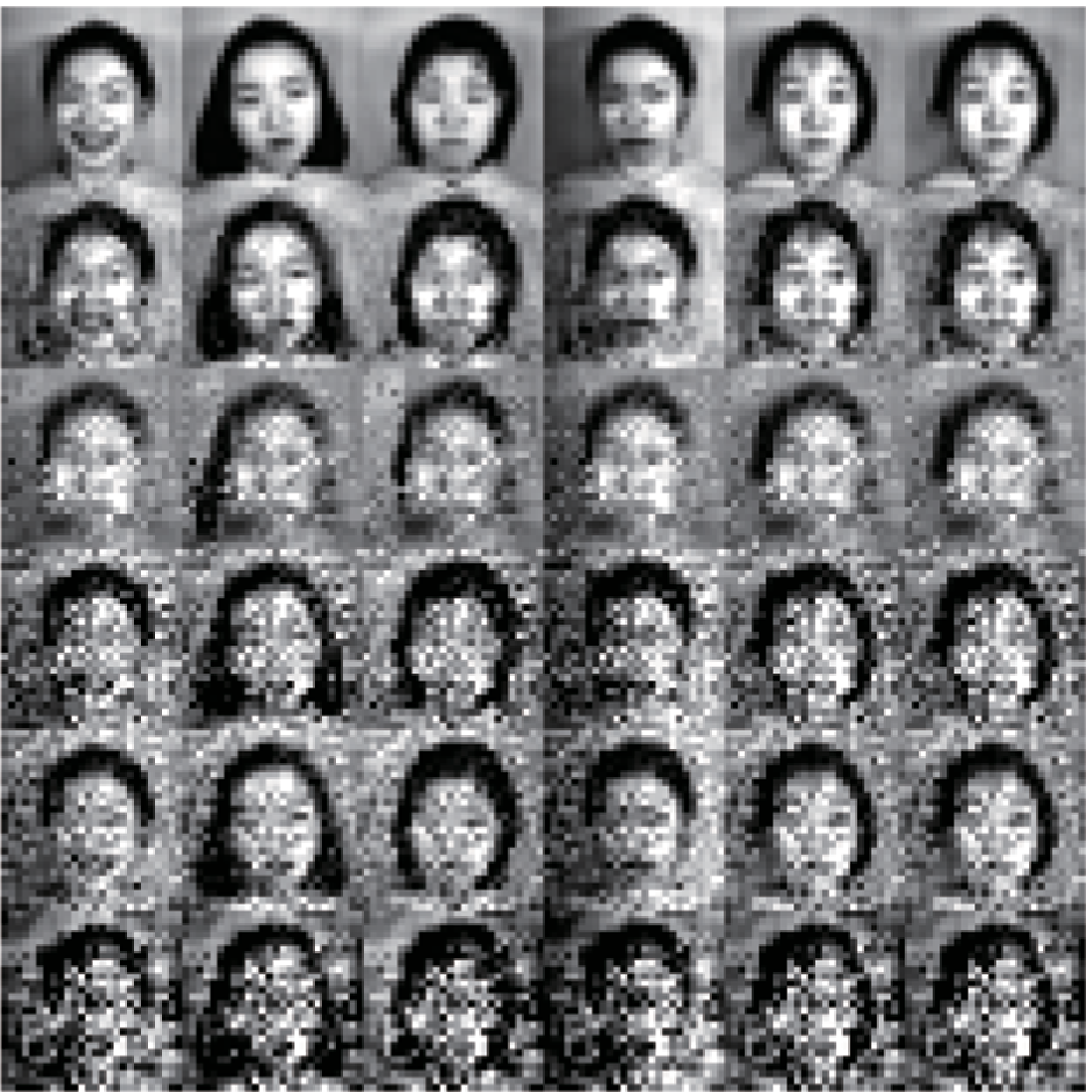} 
\caption{Reconstructed images of COIL20, ORL, and JAFFE datasets ($k=100, 200$ and $150$ respectively). The first row shows images in the test set. The second to the last row presents reconstructed images by SPCA (from left to right, $p=1.5, 1.0$ and $0.5$), $\ell_1$-norm based PCA, RPCA-OM, $\ell_{2,p}$-norm based RPCA (from left to right, $p=1.5, 1.0$ and $0.5$), RPCA-AOM, respectively.}
\label{rec_jaffe}
\end{figure*}

\begin{figure}[!htbp]
\centering
\includegraphics[width=0.45\textwidth]{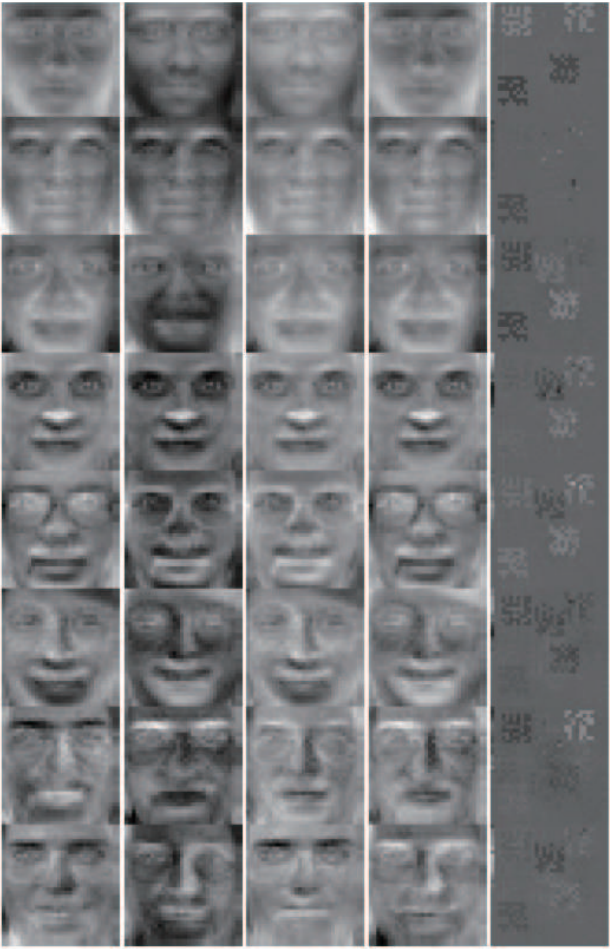} 
\caption{Some eigenfaces obtained on ORL. The left to the right column shows eigenfaces of SPCA, RPCA-OM, RPCA-AOM, $\ell_{2,p}$ based PCA, $\ell_1$ based PCA, respectively.}
\label{eigenface}
\end{figure}
 
\begin{figure}[!htbp]
     \centering
        \subfloat[$1^{st}$ on JAFFE iteration]{\includegraphics[width=.4\textwidth]{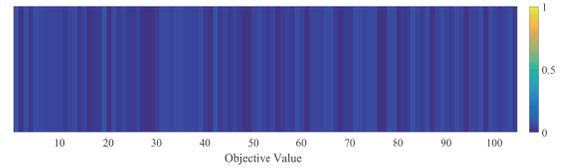}}\\
        \subfloat[$5^{th}$ on JAFFE iteration]{\includegraphics[width=.4\textwidth]{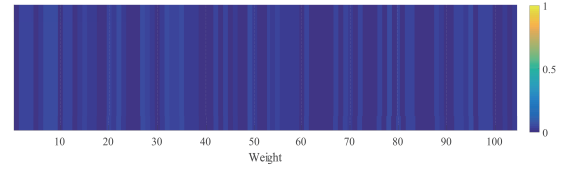}}\\
        \subfloat[$1^{st}$ on ORL iteration]{\includegraphics[width=.4\textwidth]{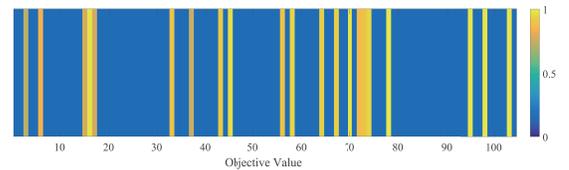}}\\
         \subfloat[$5^{th}$ on ORL iteration]{\includegraphics[width=.4\textwidth]{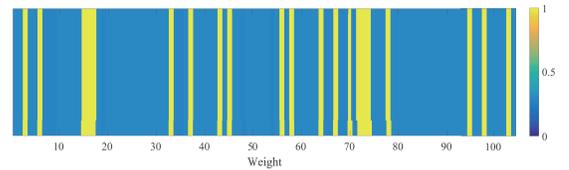}}\\

     \caption{ Visualization of the objective values and the weights at 1st and 5th iteration. The error values are normalized.}\label{cm}
\end{figure}

 \begin{figure}[!htbp]
\centering
\subfloat[Reconstruction Error of JAFFE]{\includegraphics[width=.4\textwidth]{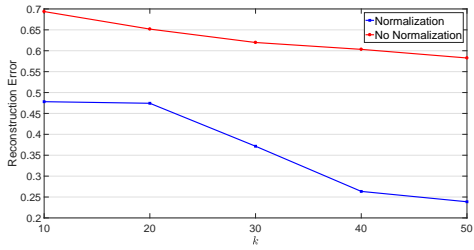}}\\
\subfloat[Reconstruction Error of ORL]{\includegraphics[width=.4\textwidth]{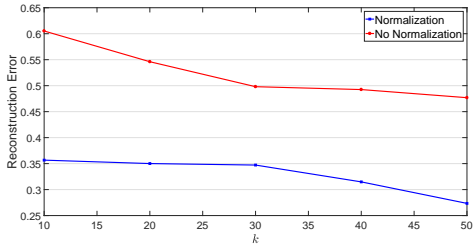}}\\
\caption{Reconstruction error with and without normalization on JAFFE and ORL databases.}
\label{Nor_jaffe}
\end{figure}

\begin{figure}[!htbp]
\centering
\subfloat[Reconstruction Error of JAFFE]{\includegraphics[width=.4\textwidth]{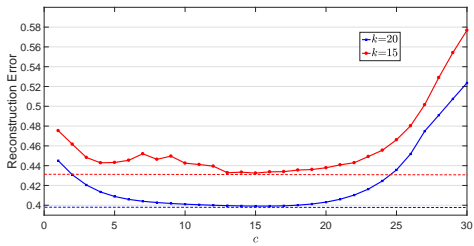}}\\
\subfloat[Reconstruction Error of ORL]{\includegraphics[width=.4\textwidth]{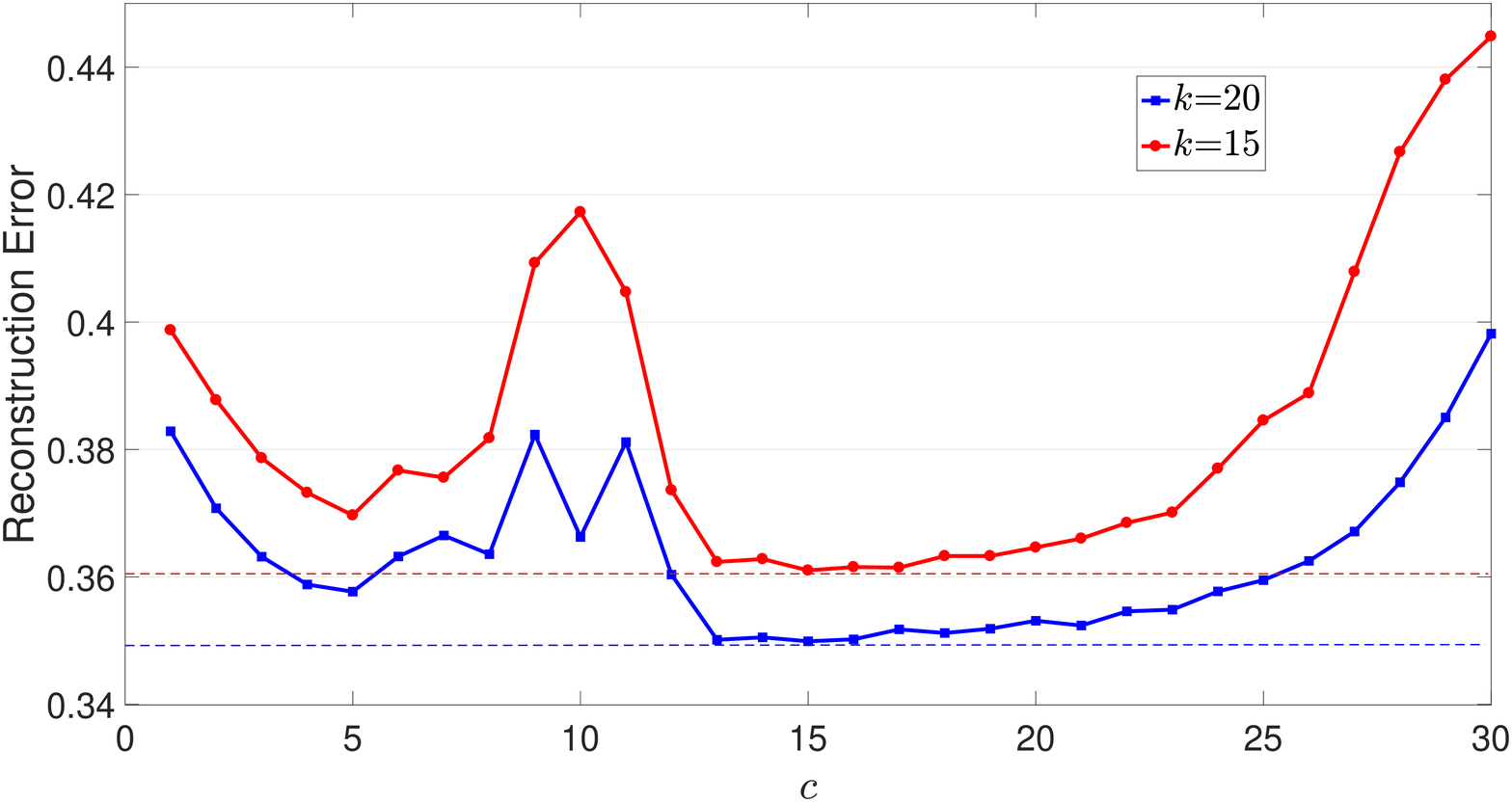}}\\
\caption{Given fixed $\eta$ = 0.1 and $p$ = 0.5, the impact of $c$ on reconstruction error for JAFFE and ORL data. The dashed line denotes the value of the lowest point.}
\label{c_ORL_Line}
\end{figure}

\begin{figure}[!htbp]
\centering
\subfloat[Reconstruction Error of JAFFE]{\includegraphics[width=.3\textwidth]{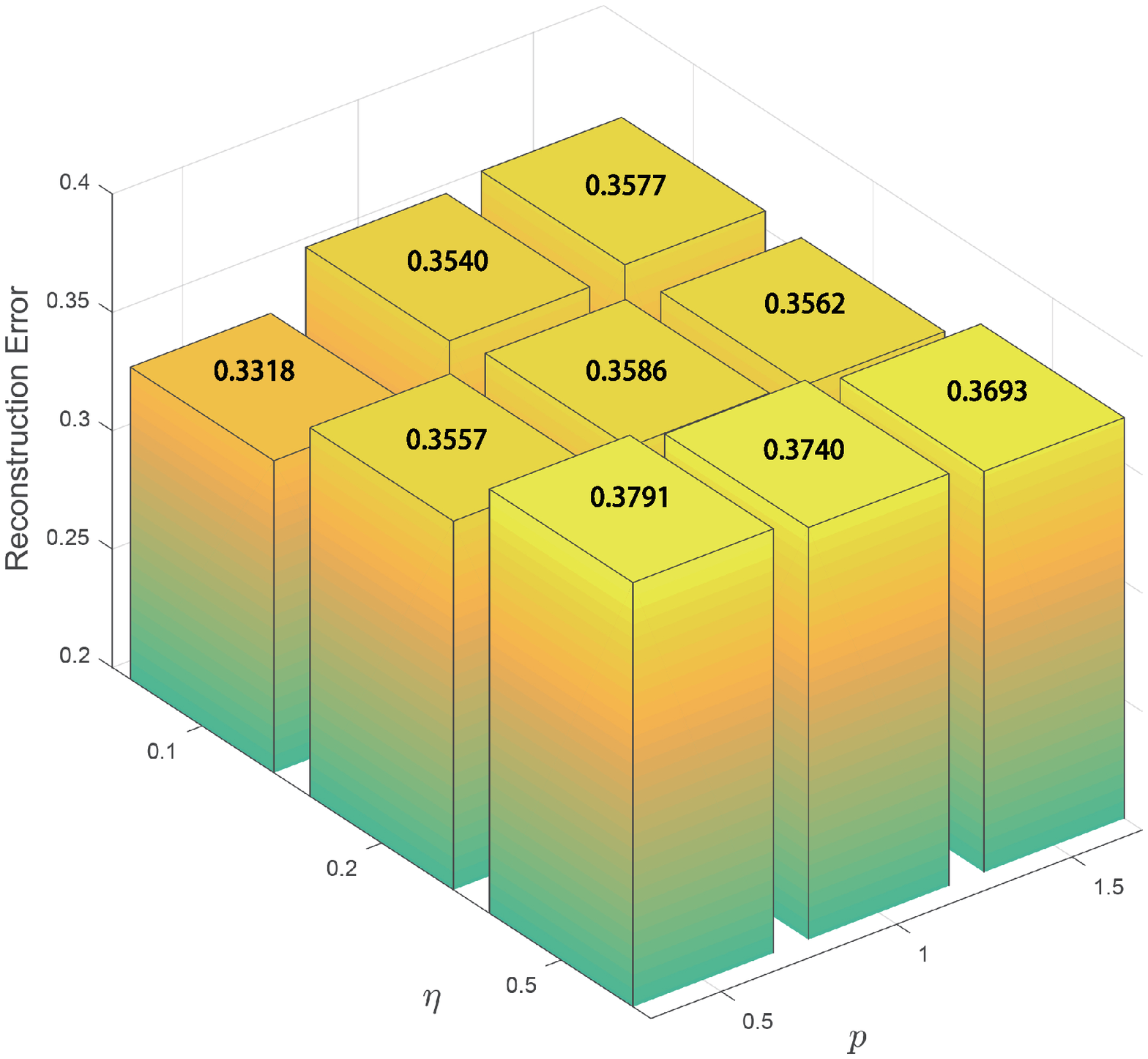}}\\
\subfloat[Reconstruction Error of ORL]{\includegraphics[width=.3\textwidth]{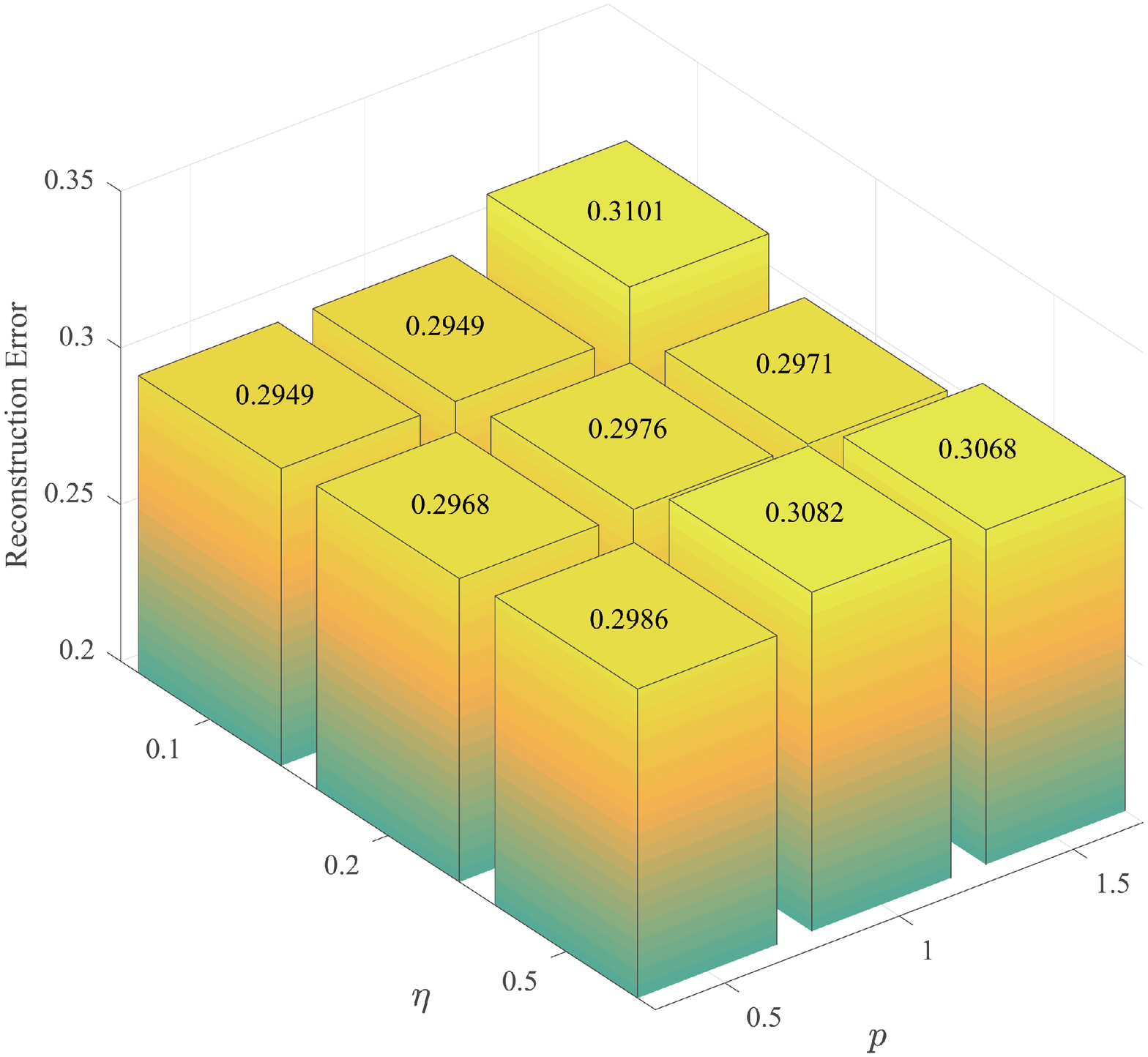}}\\
\caption{The impact of $\eta$ and $p$ on reconstruction error for JAFFE and ORL data. The error is averaged over different $k$ ($k$ = 10, 20, 30, 40, 50).}
\label{param}
\end{figure}

Table \ref{main_result} shows the reconstruction error of five methods with respect to various dimensions on three datasets. We can observe that our proposed SPCA significantly outperforms all other methods in all cases. In particular,
\begin{itemize}
\item $L_1$-PCA is overall inferior to the other four methods. The main reason is that it does not consider the mean drawback.
\item RPCA-AOM and RPCA-OM provide comparable performance. One possible reason is that RPCA-AOM gets stuck into bad local minimum.  
\item $L_{2,p}$-RPCA outperforms RPCA-OM and RPCA-AOM in most cases. This is attributed to the usage of $\ell_{2,p}$-norm which can suppress the effect of outliers. Other values of $p$ might be needed for $L_{2,p}$-RPCA to beat RPCA-OM and RPCA-AOM in all scenarios.
\end{itemize}

Figure \ref{rec_jaffe} presents part of the reconstructed images of COIL20, ORL and JAFFE datasets by five methods. It can be seen that the images reconstructed by our method are much better than other approaches. $L_1$-PCA is impacted by outliers to some degree. RPCA-OM, RPCA-AOM and $L_{2,p}$-RPCA can be easily influenced by outliers. In particular, they almost can not recover face images on ORL and JAFFE datasets, which is probably because face images are more complicated than objects in COIL20. The success of SPCA is due to our adoption of self-paced 
learning mechanism to filter out outliers, which leads to our projection vectors are less influenced by the outlying images. 

Taking ORL dataset as an example, we further compare the eigenfaces of five different methods in Figure \ref{eigenface}. It can be seen that most methods produce poor results. In particular, it is difficult to see any face in $L_1$-PCA. With respect to other methods, the eigenfaces of SPCA are less affected by the contaminated data.

We visualize the value of objective function and weights of JAFFE and ORL samples at the first and the fifth iteration in Figure \ref{cm}.
It can be seen that at the beginning of the training process, the weight of each sample is very small and close to zero. As the training progresses, the weight increases and the distinctions of complexity among samples are revealed.

\section{Parameter Analysis}

To better distinguish different samples, we apply Eq. \eqref{normalloss} to normalize the fidelity value of each sample to a specific interval. Figure \ref{Nor_jaffe} illustrates the difference between the reconstruction error with and without normalization. It demonstrates that normalization plays a crucial role. Furthermore, we show the influence of normalizing coefficient $c$'s value in Figure \ref{c_ORL_Line}, taking JAFFE and ORL dataset as examples. It indicates that SPCA has better performance when $c=15$. Figure \ref{param} presents the combination effect of $\eta$ and $p$. It illustrates that SPCA has better performance when both $\eta$ and $p$ are small and the reconstruction error reaches the minimum when $\eta=0.1$ and $p=0.5$.

\section{Conclusion}
In this paper, we build a more robust PCA formulation for dimensionality reduction by introducing the self-paced learning mechanism. To take sample difficulty levels into consideration, a novel regularizer is proposed to define the complexity of samples and control the learning pace. Subsequently, an alternative optimization strategy is developed to solve our model. Theoretical analysis is provided to reveal the convergence and robustness of our algorithm. Extensive experiments on three widely used datasets demonstrate the superiority of our method.



%
%
%
%
%
%
%
%
%
%

\bibliographystyle{IEEEtran}
\bibliography{ref}
\end{document}